
\documentclass[a4paper,conference]{IEEEtran}
\usepackage[utf8]{inputenc} 
\usepackage[T1]{fontenc}    
\usepackage{hyperref}       
\usepackage{url}            
\usepackage{booktabs}       
\usepackage{amsfonts}       
\usepackage{nicefrac}       
\usepackage{microtype}      
\usepackage{algorithm}
\usepackage[ruled,vlined,algo2e]{algorithm2e}
\usepackage{setspace}
\usepackage{fancyhdr,graphicx,amsmath,amssymb}
\usepackage{amsthm}
\usepackage{multirow}
\usepackage{xcolor}
\newtheorem{prop}{Proposition}
\usepackage{cite}

\hyphenation{op-tical net-works semi-conduc-tor}

\begin{document}
    
\title{TreeRNN: Topology-Preserving Deep Graph Embedding and Learning}

\author{\IEEEauthorblockN{Yecheng Lyu, Ming Li, Xinming Huang, Ulkuhan Guler, Patrick Schaumont, and Ziming Zhang}
\IEEEauthorblockA{Electrical and Computer Engineering\\
Worcester Polytechnic Institute\\
\{ylyu, mli12, xhuang, uguler, pschaumont, zzhang15\}@wpi.edu}
}

\maketitle

\begin{abstract}
General graphs are difficult for learning due to their irregular structures. Existing works employ message passing along graph edges to extract local patterns using customized graph kernels, but few of them are effective for the integration of such local patterns into global features. In contrast, in this paper we study the methods to transfer the graphs into trees so that explicit orders are learned to direct the feature integration from local to global. To this end, we apply the breadth first search (BFS) to construct trees from the graphs, which adds direction to the graph edges from the center node to the peripheral nodes. In addition, we proposed a novel projection scheme that transfer the trees to image representations, which is suitable for conventional convolution neural networks (CNNs) and recurrent neural networks (RNNs). To best learn the patterns from the graph-tree-images, we propose TreeRNN, a 2D RNN architecture that recurrently integrates the image pixels by rows and columns to help classify the graph categories. We evaluate the proposed method on several graph classification datasets, and manage to demonstrate comparable accuracy with the state-of-the-art on MUTAG, PTC-MR and NCI1 datasets.
\end{abstract}

\IEEEpeerreviewmaketitle

\section{Introduction}\label{Introduction}
Deep graph learning has been attracting increasing research interests in recent years \cite{Fey2019PytorchGeometric}\cite{kipf2016GCN}\cite{simonovsky2017ECConv}\cite{zhang2018DGCNN}\cite{hamilton2017GraphSAGE}\cite{xu2018GINConv}\cite{morris2019GraphConv}\cite{gilmer2017message_passing}\cite{jiang2019gic}\cite{cao2016DNGR}\cite{al2019ddgk}\cite{niepert2016PSCN}. As a widely used data structure to store the topological features, a graph saves the point features in a node list and their affiliations as node edges. The nodes in a graph are orderless and the affiliations are sparse, which makes it difficult for deep graph learning. Trees are ordered graphs with a clear hierarchy, but they still fail to serve as tensors for network processing. In contrast, images have a tensor-like structure with densely ordered pixels in local regions. Such local regularity is beneficial for fast convolutions and recurrent processing that efficiently and effectively learn the local pattern from pixels within different applications.

\textbf{Motivation.}
Existing graph neural networks (GNNs) try to collect the features from adjacent nodes through message passing \cite{gilmer2017message_passing} to perceive local pattern. In GNN kernels, the center node plays the same role as adjacent nodes or is just a bit higher weighted in convolution. Surprisingly, we find that there are few existing works contributing to the hierarchy that integrates the local features to global features. In addition, existing GNN works mainly focus on specialized graph convolution kernels, which cannot benefit from the conventional neural networks. These observations motivate us to address the following question:
\begin{center} 
  \em{How to effectively and efficiently project graphs into an ordered and regularized space so that we can take advantage of pattern extraction in conventional neural networks for graph learning?}
\end{center}

\begin{figure*}[h]
\centering
\includegraphics[width=\textwidth]{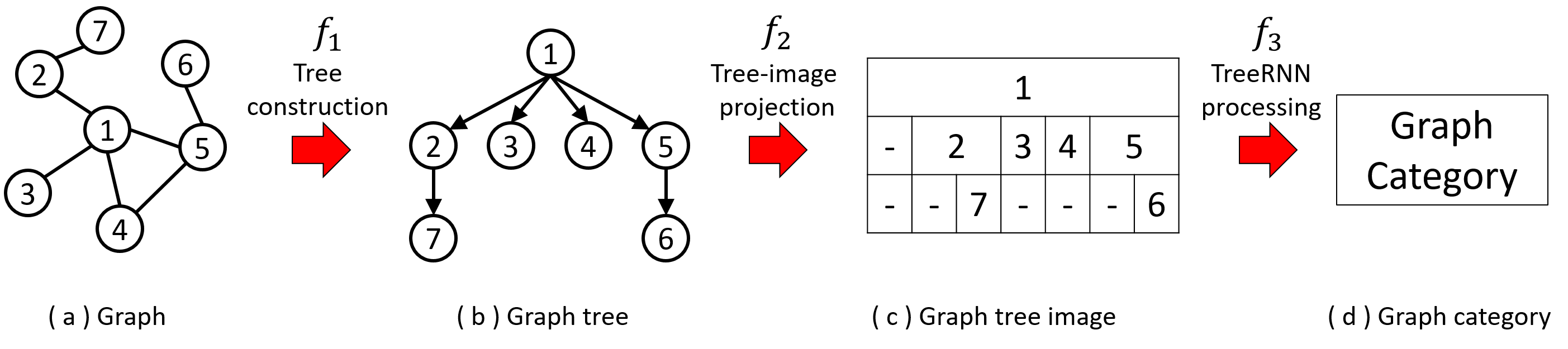}
\caption{System overview of the proposed graph classification method. (a) a given graph; (b) a tree construct from the corresponding graph; (c) an image representation projected from the graph-tree; (d) graph category predicted by the TreeRNN from the graph-tree-image. \label{fig:overview}}
\end{figure*}

\textbf{Approach.}
The question above is critical. A bad projection function easily leads to the loss of topological information in a graph with, for instance, misplaced parents and children in a tree. Such topological loss is fatal as it may introduce so much noise that the pattern is completely changed. Therefore, a good graph projection function is the key to ordered and regularized representations of input graphs. 

At the system level, the pipeline of our method can be summarized as follows: (1) construct trees from graphs, (2) project the trees into image space, and (3) classify graph-tree-images using TreeRNN, our proposed novel network architecture.

We are motivated by the DeepWalk \cite{perozzi2014deepwalk} that generates a batch of ordered node list from a graph. In DeepWalk, a random walk scheme is applied to a graph, which starts from a selected root node and goes through the graph edges for several steps, resulting in a list of nodes it passed. By applying the random walker multiple times starting from the same node, a batch of node lists is constructed and reformed as an image for network processing. However, the random walk fails to guarantee the tree to coverage over all graph nodes, and it fails to stop the walker visiting a node more than once. Those failures in topological-preserving confuse the neural network to encode the graph topology. In contrast, we propose to construct trees, a kind of directed acyclic graph, which guarantee coverage to all graph nodes and no repeated nodes on the tree. Within this projection, nodes are clearly ordered in the tree space, which helps extract the local topological features. To further transfer the tree to a structure that is feasible to conventional CNN and RNN, motivated by the Ordered Neurons \cite{shen2019ordered}, we employ a block-view like projection from the tree to image space. The block view image explicitly presents the hierarchy of a tree in the pixel context, which contributes to better feature extraction and classification.

Now the graph-tree-image are ready for conventional CNN and RNN processing. To better take advantage of the order and regularity in graph-tree-images, we propose TreeRNN, a novel RNN architecture that integrates the pixels in the images following the tree structures. Specifically, we employ a vanilla RNN unit and designed a novel network module to achieve 2D recurrent integration on image rows and columns by turns. The pixels in the same row represent the graph nodes on the same layer in the tree, while the pixels in the same column represent the graph nodes connected across the tree layers. By employing this novel network module, we succeed to extract features form graph-tree-images within few parameters, which makes the TreeRNN light-weighted.

\textbf{Contribution.}
In summary, our key contributions in this paper are as follows:
\begin{itemize}
	\item We are the {\em first}, to the best of our knowledge, to explore the graph-tree-image projection in the context of graph classification.
	\item We accordingly propose TreeRNN, a novel RNN architecture to process the graph-tree-images that recurrently process on image rows and columns in turns, which implicitly passes through the tree structure.
	\item We apply the integrated method to the graph classification application and experiment on three graph classification datasets named MUTAG, PTC and NCI1. Our work results in comparable performance with the state-of-the-art works on those benchmark datasets, which demonstrate the success of our graph-tree-image projection scheme and TreeRNN architecture.
\end{itemize}

\section{Related Work}\label{Related_Work}
\textbf{Graph Embedding.}
Graph Embedding has been studied for decades \cite{hamilton2017representation}, whose goal is to find a low dimensional representation of the graph nodes in some metric space so that the given similarity (or distance) function is preserved as much as possible. Force-directed graph layout is a series of graph embedding algorithms that try to project a graph to a 2D plane while preserving the distance between graph nodes using a force-directed function. Widely used methods in that series are Kamada-Kawai \cite{kamada198Kamada_Kawai}, Fruchterman-Reingold \cite{fruchterman1991graph} and FM$^3$ \cite{meyerhenke2015drawing}. These algorithms achieved great success in graph visualization. For graph embedding aimed at deep learning, however, those methods result in a high topological disparity on complicated graph embedding. In a recent survey paper \cite{cai2018comprehensive}, a comprehensive understanding of graph embedding techniques is introduced including the problems, techniques, and applications.

In our paper, different from the graph embedding methods that try to represent the graphs in low-dimension space for visualization, we propose an ordered and regular representation of a graph so that conventional convolution kernels and recurrent operators can apply to it.

\textbf{Tree Construction from Graphs.}
Tree construction from graphs tries to generate a connected and directed sub-graph with no cycles. Minimum spanning tree (MST) \cite{eisner1997state} is a kind of tree construction method that generates a tree with a minimum sum of edge weights. Graph tree search is another kind of method that traverses all graph nodes from a selected root node and generates a tree along the path. Depth-first search (DFS) \cite{tarjan1972depth} explores as far as possible along each branch before backtracking, while breadth-first search (BFS) \cite{bundy1984breadth} explores all of the neighbor nodes at the present depth before moving on to the nodes at the next depth level. Other tree construction methods includes K-MST \cite{zelikovsky1993minimal}, AVL tree \cite{adel1962AVL} and B-tree \cite{bayer2002BTree}.

In this paper, we employ the BFS to construct trees from graphs because it covers all the connected graph nodes within the fewest layers, which minimizes the memory allocation of the image representations described in Section \ref{subsec:f2}. 

\begin{figure*}[h]
\centering
\includegraphics[width=0.9\textwidth]{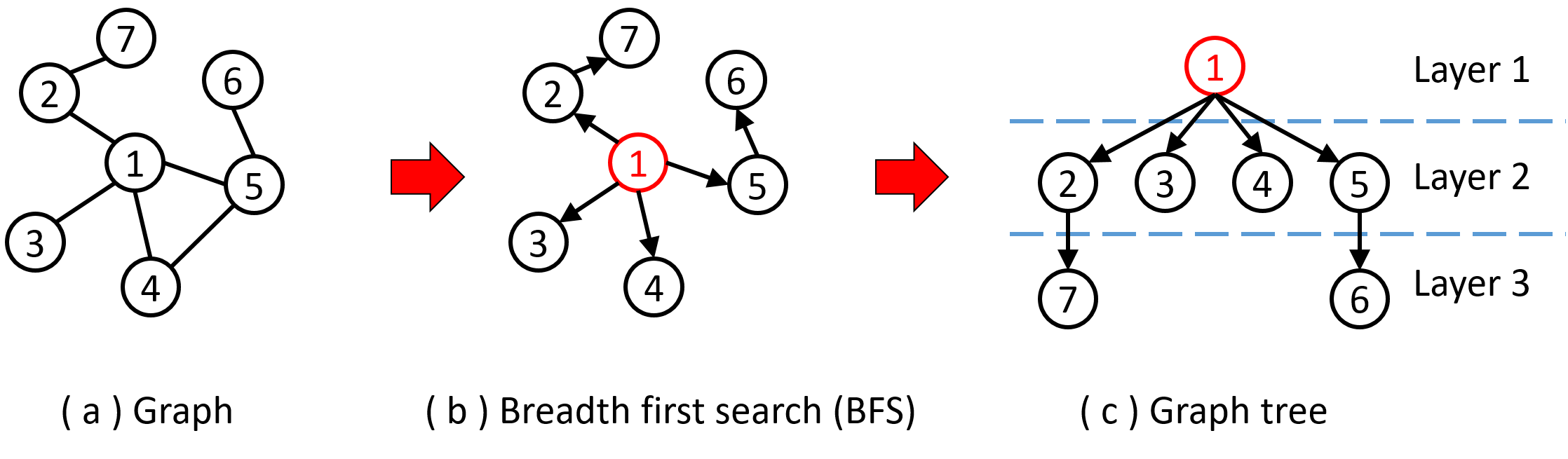}
\caption{Illustration of function $f_1:$  Tree construction from a given graph using breath first search. (a) a given graph; (b) apply breadth first search to the graph from its center node; (c) a constructed tree from the graph. \label{fig:f1}}
\end{figure*}

\textbf{Deep Graph Learning.}
Deep graph learning is an extension of conventional deep learning algorithms to the graph, an orderless and irregular data structure connecting the paired nodes with edges. Graph convolution \cite{Fey2019PytorchGeometric} is a big family of deep graph learning that fuses the local graph subsets by collecting the adjacent nodes' features via message passing and applying a pooling operation (max, sum or average) on them. GCN \cite{kipf2016GCN}, DGCNN \cite{zhang2018DGCNN}, ECConv \cite{simonovsky2017ECConv}, GraphSAGE \cite{hamilton2017GraphSAGE}, GraphConv \cite{morris2019GraphConv}, and GINConv \cite{xu2018GINConv} are good examples in this family. Another family of deep graph learning methods embed graphs into other feature spaces followed conventional neural network processing. This family includes DeepWalk \cite{perozzi2014deepwalk}, DDGK \cite{al2019ddgk}, DNGR \cite{cao2016DNGR}, PSCN \cite{niepert2016PSCN}, LINE \cite{tang2015line}, M-NMF \cite{wang2017M-NMF}, and WKPI \cite{zhao2019WKPI}. Graph embedding also works on graph-related data structures \emph{i.e.} Ordered Neurons \cite{shen2019ordered} on trees and Lyu \emph{et al.} \cite{lyu2020learning} on point clouds. In our paper, we follow the graph embedding family and embed the graphs to image space via a graph-tree-image projection.

\section{System Overview}\label{Overview}
In this paper, We focus on the problem of graph classification. Generally, let $\mathcal{G}\{V,E,X,Z\}$ denotes a graph where $V$ denotes the set of graph nodes, $E \subseteq (V \times V)$ denotes the set of graph edges, $X \in \mathcal{R}^{|V|\times S_V}$ denotes the set of node features with feature size $S_V$, and $Z \in \mathcal{R}^{|E|\times S_E}$ denotes the set of edge features with feature size $S_E$.

In the traditional graph learning setting, we learn a projection function $f: \mathcal{G} \rightarrow \mathcal{Y} \in \mathcal{F}$ that map a graph to one of the semantic labels. In our case, we take three steps to achieve the goal. Firstly, we propose a projection function $f_1: \mathcal{G} \rightarrow \mathcal{T}$ where $\mathcal{T}$ denotes a tree. The function is aimed to transfer the graph to tree space where the nodes are ordered by directed edges. Secondly, we proposed projection function $f_2: \mathcal{T} \rightarrow \mathcal{I}$ that further transfer the graph from the tree space $\mathcal{T}$ to image space $\mathcal{I}$, in which each graph node is mapped to the one or multiple pixels. Thirdly and lastly, we learn a neural network classifier $f_3: \mathcal{I} \rightarrow \mathcal{Y} \in \mathcal{F}_3$ that predicts the graph labels by classifying the graph-tree-images. The neural network classifier learns to minimizing certain loss function $\ell = \ell(f_3(\mathcal{I}),\mathcal{Y})$.
The pipeline is illustrated in Figure \ref{fig:overview}. Please note that $f_1$ and $f_2$ are non-trainable functions, and $f_3$ is learned from its parameter space $\mathcal{F}_3$.

\section{Projection from graph to tree and image space}
\subsection{Tree Construction from Graphs} \label{subsec:f1}
Tree construction from graphs has been well studied in graph theory. In our work, a tree constructed from a graph denotes a rooted directed acyclic graph (DAG) that contains all graph nodes and a subset of graph edges. A tree representation of a graph have two advantages: (1) it is rooted and directed, which contributes to the context feature extraction by order-sensitive operators such as convolutional kernels and recurrent units; (2) it has no cycles so that every node is visited only once along the tree, which helps eliminate confusion to the graph structure during feature extraction. DeepWalk \cite{perozzi2014deepwalk} also generates rooted, and directed subgraphs, however, it allows cycles and even self-loops that results in repeated visits to a node, which makes the feature extractor confused to learn the global features, \emph{i.e. how many nodes are there in the graph?}
Figure \ref{fig:f1} illustrates the steps to construct a tree from the input graph.

In our work, we employ the breath first search (BFS) to accomplish the first projection function $f_1: \mathcal{G} \rightarrow \mathcal{T}$. Comparing to the other tree construction methods such as depth first search (DFS) and minimum spanning tree (MST), the BFS traverses the graph within the least depth from the select depth. To minimize the tree depth constructed from the graph, we set the root node to the one with shortest length to its farthest node. The tree construction scheme is described in Algorithm \ref{Alg:f1}. Dijkstra in the algorithm denotes the Dijkstra Algorithm \cite{dijkstra1959dijkstra} that calculates the distance between each node pairs in the graph.

\begin{algorithm}[h]\footnotesize
    \KwIn{Graph $G \in \mathcal{G}$}
    \KwOut{Tree representation $T \in \mathcal{T}$}
    \quad \\
    $A \leftarrow adjacency\_matrix(G)$; \\
    $H \leftarrow dijkstra(A) $;\\
    $root \leftarrow argmin_x(\max_y(H(x,y)))$;\\
    $T \leftarrow BFS(G,root)$;\\
    \textbf{Return} $T$
    
	\caption{{$f_1: \mathcal{G} \rightarrow \mathcal{T}$ Tree Construction from Graph} \label{Alg:f1}}
\end{algorithm}

\subsection{Projection from Trees to Images} \label{subsec:f2}
The projection function $f_1$ constructs a directed and acyclic tree from a general graph. However, the tree structure is still non-feasible for conventional neural network processing. Hence, another projection function $f2: \mathcal{T} \rightarrow \mathcal{I}$ is proposed to further transfer the tree to an image-like array, which is feasible for network processing.

Given a set of tree ${T}$ with maximum node size $|V|_{max}$ and maximum depth $D_{max}$, the projection function $f_2$ is aimed to map all the nodes in each tree to a fixed-sized image space while preserving its topology. Specifically, there are two topological features we expect to keep: (1) child nodes connected to the same parent node are expected to be connected after projection to image space, and (2) each node is also expected to keep their adjacency to its parent node in the target image space to avoid confusion during network processing, those two adjacency should distinguish to each other. Considering the memory efficiency, we also want to limit the image space to a suitable size.

In our work, inspired by the block view projection in the DeepWalk \cite{perozzi2014deepwalk}, we propose a similar projection $f_2$ from tree space to image space. The projection is illustrated in Figure \ref{fig:f2}. As we see, graph nodes in each layer of the tree occupy a row in the image and each node covers pixels as many as its descendant node size plus one representing itself. Child nodes connected to the same parent node are connected to each other in the same row, while each child node is next to its root node in the same column, which satisfies the expectations we proposed before.

\begin{figure}[h]
\centering
\includegraphics[width=0.95\columnwidth]{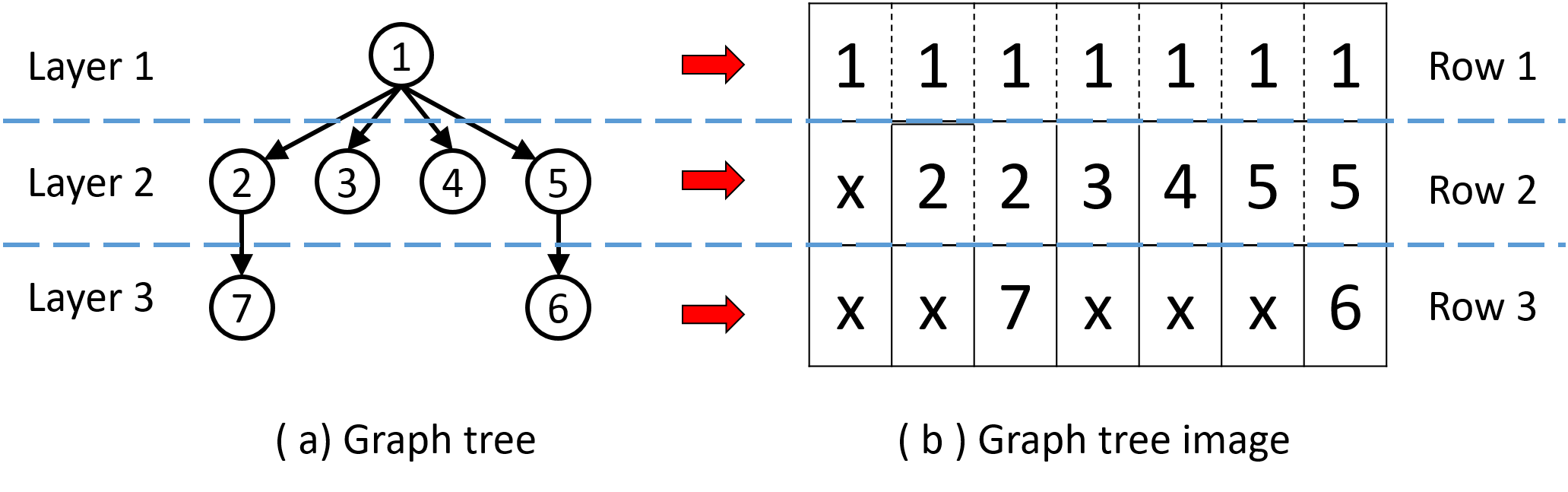}
\caption{Illustration of $f_2:$ projection of a graph from tree space to image space. (a) a given graph-tree from function $f_1$; (b) an image representation projected from the graph tree, Each pixel of the image contains the \textbf{node features} of the projected graph node and the \textbf{edge features} connecting the node and its parent. \label{fig:f2}}
\end{figure}

To determine the required image size to store the projected trees, we have to find the maximum rows and columns in need to project the graph set ${\mathcal{T}}$. According to $f_2$, the minimum number of rows $N_{row}(T)$ required for input $\mathcal{T}$ equals its tree depth $D(\mathcal{T})$. For the required number of columns $N_{col}(T)$, we introduce the following proposition to determine.

\begin{prop}\label{prop:1}
    Given a tree $\mathcal{T}$ defined in Section \ref{subsec:f1}, its required columns $N_{col}(T)$ is not less than to its node size $|V|$, meaning $N_{col}(T) \geq |V(T)|$.
\end{prop}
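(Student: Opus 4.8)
The plan is to read off the claim directly from the definition of the projection $f_2$ applied to the root of the tree. Recall from Section~\ref{subsec:f2} that $f_2$ places each tree layer in a single image row, and that within that row a node is allotted a contiguous block of pixels whose width equals the number of its descendants plus one (the $+1$ accounting for the node itself). I would first make this rule precise as an assignment of pairwise-disjoint pixel-intervals inside each row, and note that consistency of the rule across one layer forces the width required by that layer to be $\sum_{v\text{ in the layer}} (\#\mathrm{desc}(v) + 1)$, where $\#\mathrm{desc}(v)$ is the number of descendants of $v$ in $\mathcal{T}$.

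Next I would specialize to the topmost layer, which contains only the root $r$. Because $\mathcal{T}$ is, by the definition in Section~\ref{subsec:f1}, a rooted DAG that contains every node of the underlying graph and in which $r$ reaches all the others, the root has exactly $|V(T)| - 1$ descendants. Hence the root's block has width $(|V(T)|-1) + 1 = |V(T)|$, and since the image must be wide enough to hold the root's row, $N_{col}(T) \geq |V(T)|$, which is exactly the stated inequality.

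If one also wants the matching upper bound (so that equality holds and the image width is precisely $|V(T)|$), I would argue by induction on the depth: the subtrees rooted at the distinct nodes of any fixed layer are pairwise disjoint and their union is contained in $V(T)$, so the per-layer total $\sum_{v}(\#\mathrm{desc}(v)+1)$ never exceeds $|V(T)|$. Combined with the root-layer computation this pins $N_{col}(T) = |V(T)|$, though the proposition as stated requires only the inequality and so this step can be omitted.

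There is essentially no hard step here; the only point that needs care is stating the layout rule of $f_2$ formally enough that ``the root occupies $|V(T)|$ pixels'' becomes a consequence rather than a picture-level assertion — in particular, checking that the assigned blocks are non-overlapping and that the rule ``width $=$ descendants $+\,1$'' is exactly what Figure~\ref{fig:f2} depicts.
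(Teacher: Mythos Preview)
Your argument is correct and is essentially the paper's own proof: both establish $N_{col}(T)\geq |V(T)|$ by computing the width of the root row. The only cosmetic difference is that the paper derives a layer-to-layer recurrence $n_i = n_{i+1} + |V|_{L_i}$ and telescopes it to get $n_1 = \sum_i |V|_{L_i} = |V(T)|$, whereas you apply the ``descendants $+\,1$'' rule to the root in one step; the content is identical.
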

\begin{proof}
    Given a tree $\mathcal{T}$, its required columns for the first row (root layer) $n_1 = n(root)$ equals the sum of required columns for its child nodes plus itself, meaning
    \begin{align}\label{eqn:row1_raw}
    n_1 = n(root) = \sum_{v \in leaf(root)}{n(v)} + 1.
    \end{align}
    While the set of leaf nodes of root node is exactly the set of nodes in the second row, and root is the only root in the first layer, meaning
    \begin{align}\label{eqn:row1}
    n_1 = \sum_{v \in leaf(root)}{n(v)} + 1 = n_2 + |V|_{L_1}.
    \end{align}
    Let us extend the Eqn. \ref{eqn:row1} to other rows, we have
    \begin{align}\label{eqn:rowi}
    n_i = n_{i+1} + |V|_{L_i},
    \end{align}
    and for the last row,
    \begin{align}\label{eqn:row_last}
    n_{D(T)} = |V|_{L_{D(T)}}.
    \end{align}
    From Eqn. \ref{eqn:rowi} we conclude:
    \begin{align}\label{eqn:row_leq}
    n_i = n_{i+1} + |V|_{L_i} \geq n_{i+1}.
    \end{align} 
    Combine Eqn. \ref{eqn:row1} to \ref{eqn:row_leq}, we have
    \begin{align}\label{eqn:row_sum}
    N_{col}(T) \geq |V|_{L_1} + |V|_{L_2} + \dots +|V|_{L_{D(T)}} = |V(T)|.
    \end{align} 
    
\end{proof}

From Prop. \ref{prop:1} we conclude that the required image size $|\mathcal{I}|$ for graph set ${\mathcal{G}}$ equals $|V|_{max} \times D_{max}$. A detailed projection function is described in Algorithm \ref{Alg:f2}.

\begin{algorithm}[h]\footnotesize
    \KwIn{Tree $T \in \mathcal{T}$, Image space $|V|_{max} \times D_{max}$ }
    \KwOut{Image representation $I \in \mathcal{I}$}
    \quad \\
    $L_1 = [root(T)]$\\
    \ForAll {$i \in 1,2,\dots,D(T)$}{%
      $\;\;  i_{col} = 1$\\
      $L_{i+1} \leftarrow \varnothing $\\
      \ForAll {$v \in L_i$}{%
          \If{$v \neq \varnothing$}{
            $\;\; size_v \leftarrow ChildSize(v)$\\
            $I(i,i_{col}:i_{col}+size_v) \leftarrow v$\\
            $i_{col} \leftarrow i_{col}+size_v$\\
            $L_{i+1} \leftarrow L_{i+1}\cup \{ \varnothing,leaf(v)\}$
          }\Else{
            $\;\; I(i,i_{col}:i_{col}) \leftarrow \varnothing$\\
            $i_{col} \leftarrow i_{col}+1$\\
            $L_{i+1} \leftarrow L_{i+1}\cup \{ \varnothing\}$
          }
      }
    }
    \Return $I$
	\caption{{$f_2: \mathcal{T} \rightarrow \mathcal{I}$ Projection from Trees to Images}
	\label{Alg:f2}}
\end{algorithm}

\section{TreeRNN: a 2D RNN Network on Graph-tree-image}
The project function $f_2$ successfully transfers a graph from tree space to image space. In this section, we propose $f_3: \mathcal{I} \rightarrow \mathcal{Y}$ that extracts the topological features from the graph images and classifies their categories. 

There exist several approaches to $f_3$. We can choose widely used image classifiers such as ResNet \cite{he2016resnet} and GoogleNet \cite{szegedy2015googlenet} that are powerful, robust to extract features from images and estimate their categories. However, they are designed for images captured by cameras without considering the tree structure implied in the graph-tree-images. In ordered neurons \cite{shen2019ordered} an RNN named ON-LSTM is proposed to learn these context feature by processing the graph image column by column. With the help of its customized activation function, the recurrent unit resets its neurons before it starts the next segments. This RNN structure is specifically designed for tree images and achieved impressive results in the experiments. Unfortunately, the ON-LSTM is designed for a tree within 3 layers, which is difficult to work on deep trees.

\begin{figure*}[h]
\centering
\includegraphics[width=\textwidth]{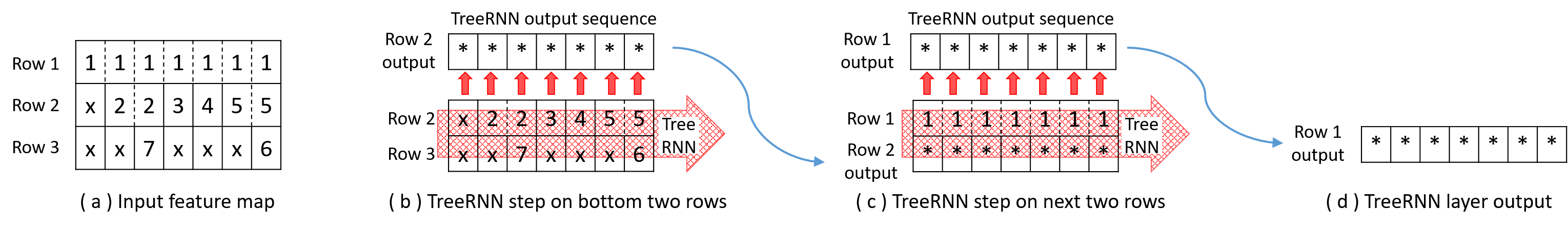}
\caption{{Illustration of TreeRNN operation in $f_3:$ network architecture. (a) a given image-like feature map from the previous network layer; (b) TreeRNN's first step on the feature map, which integrates pixels in the last two rows and return output sequence; (c) TreeRNN's next step on the feature map, which integrates pixels in the output of the last TreeRNN step and the next row of the feature map; (d) TreeRNN keeps integrating the output row from the last step and the input row from the feature map until it goes through all the rows in the input feature map, and eventually returns an output sequence. }\label{fig:f3_network}}
\end{figure*}

\textbf{TreeRNN.}
In our work, we propose TreeRNN, a 2D RNN architecture that is optimized for graph-tree-images. The TreeRNN takes existing RNN unit as kernels but works on rows and columns in turns on a 2D feature map. By working along with the rows, the RNN unit goes along with the tree layers to integrates the graph nodes within the same layer; by working along with the columns, it goes across the layers to integrate the nodes with their parents. Algorithm \ref{Alg:f3_RNN} presents the scheme of the proposed TreeRNN. Figure \ref{fig:f3_network} also illustrates how the RNN unit recurrently process the pixels by rows and columns.

\begin{algorithm}[h]\footnotesize
    \KwIn{Input tensor $ In(u,v,c) $ with size $ H \times W \times C $, RNN}
    \KwOut{Output tensor $ Out(u,v,c1) $ with size $ H \times W \times C1 $}
    \quad \\
    $ S \leftarrow RNN\left( In(1:2,\;:\;,\;:\;) \right) $\\
    \ForAll{$ u \in 3,4,\dots,H$}{
        $ \;\; S \leftarrow RNN\left( Concat(S,In(u,\;:\;,\;:\;)) \right) $
    }
    $ \;\; Out \leftarrow S $\\
    \Return $Out$
	\caption{{$f_3\_RNN:$ 2D RNN working scheme}
	\label{Alg:f3_RNN}}
\end{algorithm}

During processing, the RNN unit has to identify the "brother" nodes sharing the same parent node in the tree and separate them from with "cousin" nodes that share the same grand parent or great-grand parent node. In Ordered Neurons \cite{shen2019ordered} a master activation and a customized LSTM layer are proposed to force clear states after integrating a segment of "brother" nodes. In our work, we write the identification inside the graph-tree-images. As mentioned in Section \ref{subsec:f2}, in each row, we reserve a pixel before placing a segment of "brother" nodes, which separates this segment with others. Additionally, the gap between the segments exceeds one pixel if they are "distant relatives". Hence, the relationship between the graph nodes is clearly embedded in the graph images so that it can be easily learned by RNN.

\section{Experiments}
\subsection{Experimental Setup}
\textbf{Datasets.}
We evaluate our method \emph{i.e.} tree construction + image representation + network classifier, on three medium-size datasets for graph classification, namely MUTAG, PTC-MR and NCI1. Table \ref{tab:statistics} summarizes some statistics of each dataset.

\begin{table}[h]
	\caption{\footnotesize Statistics of benchmark datasets for graph classification.}
	\label{tab:statistics}
		\centering
		\begin{tabular}{c|cccc}
			\toprule
			Dataset & \begin{tabular}[c]{@{}c@{}}Num. of\\Graph\end{tabular} & \begin{tabular}[c]{@{}c@{}}Num. of\\Class\end{tabular} & \begin{tabular}[c]{@{}c@{}}Avg.\\Node\end{tabular} & \begin{tabular}[c]{@{}c@{}}Avg.\\Edge\end{tabular} \\
			\midrule
			MUTAG  & 188 & 2 & 17.93 & 19.79\\
			PTC-MR & 344 & 2 & 14.29 & 14.69 \\
			NCI1   & 4110 & 2 & 29.87 & 32.30 \\
			\bottomrule
		\end{tabular}
\end{table}

\textbf{Implementation.}
By default, we design a simple network for $f_3$ to demonstrate the success of our graph-tree-image projection and TreeRNN. The network is "MLP $\rightarrow$ TreeRNN $\rightarrow$ MaxPool $\rightarrow$ FC", where MLP denotes a point-wised multi-layer perceptron and FC denotes a fully-connected layer. By default, we utilize a point-wised single-layer perceptron with 64 neurons and relu activation for the MLP block, a vanilla RNN unit with 64 neurons for the TreeRNN operator, and FC layer is set to have a softmax activation and an output size of $|\mathcal{Y}|$, the size of categories in the graph set ${\mathcal{G}}$.

We implement the scheme in a GPU machine with an Intel Core i5-6500 CPU and an NVidia GTX1060 GPU. The implement environment include the following key packages: Python 3.7, Networkx 2.4 \cite{hagberg2008Networkx}, and Tensorflow 2.2 \cite{tensorflow2015tensorflow}.

\begin{table*}[h]
\caption{Graph classification results (\%) in MUTAG, PTC-MR and NCI1. Numbers in red are the best in the column, and numbers in blue are the second best.}
\label{table:SOTA}
\centering
\begin{tabular}{| c | c | c | c | c |} 
 \toprule
 Category & Method & MUTAG & PTC-MR & NCI1 \\ 
 \hline
 \multirow{4}{5em}{Graph Convolution} 
 & GraphConv \cite{morris2019GraphConv} & 86.1 & - & 76.2 \\
 & GINConv \cite{xu2018GINConv} & \textcolor{red}{95.00 $\pm$ 4.61} & 72.94 $\pm$ 6.28 & 80.32 $\pm$ 1.73 \\
 & ECConv \cite{simonovsky2017ECConv} & 89.44 & - & 83.80 \\
 & DGCNN \cite{zhang2018DGCNN} & 85.83 $\pm$ 1.66  & 58.59 $\pm$ 2.47 & 74.44 $\pm$ 0.47  \\
 & GIC \cite{jiang2019gic} & 94.44 $\pm$ 4.30
 &  \textcolor{red}{77.64 $\pm$ 6.98} & 84.08 $\pm$1.07 \\
 \midrule
 \multirow{4}{5em}{Graph Embedding} 
 & PSCN \cite{niepert2016PSCN} & 88.95 $\pm$ 4.37 & 62.29 $\pm$ 5.68 & 76.34 $\pm$ 1.68 \\
 & DDGK \cite{al2019ddgk} & 91.58 $\pm$ 6.74 & 63.14 $\pm$ 6.57 & 68.10 $\pm$ 2.30 \\
 & WKPI \cite{zhao2019WKPI} & 85.8 $\pm$ 2.5 & 62.7 $\pm$ 2.7 & \textcolor{red}{87.5 $\pm$ 0.5} \\
 & \textbf{Ours} & \textcolor{blue}{94.74 $\pm$ 5.55} & \textcolor{blue}{74.69 $\pm$ 5.78} & \textcolor{blue}{84.96 $\pm$ 4.81} \\
 \bottomrule
\end{tabular}
\end{table*}

\textbf{Experimental Scheme.}
By default, during the experiment on each dataset, we separate the dataset into 10 folds, in which the samples within each category are evenly distributed. At each time we train within 9 folds and test within the last fold. During training, we use Adam \cite{duchi2011Adam} with learning rate $lr = 10^{-4}$ as the network optimizer. We then record the best accuracy of each fold, and calculate the mean accuracy as well as the standard deviation.

\subsection{Graph Classification}
To do a fair comparison for graph classification, we follow the standard routine, \emph{i.e.} 10-fold cross-validation with a random split.
In each fold, we have the same number of samples in each graph category. In this experiment, we apply an end-to-end training scheme that consistently inputs graph samples, generates graph-tree-images with data augmentation and feeds into network training steps. For each fold, we train the network 500 epochs and record the best testing result. In Table \ref{table:SOTA} we compare our method with several existing works on graph leaning.

The results show that our method achieves the second best accuracy in all three datasets. To emphasize
, our work results in the best accuracy on MUTAG and PTC-MR dataset among the graph embedding solutions. The small variances indicate the stability of our method. In summary, such results demonstrate the success of our method on graph classification.

\subsection{Ablation Study}
\textbf{Effects of Data Augmentation using Grid Layouts on Classification.}
In order to train the deep classifiers well, the amount of training data is crucial. In this paper we add two data augmentation: (1) in Algorithm \ref{Alg:f1} we randomly cut graph cycles during breadth first search (BFS) as illustrated in Figure \ref{fig:BFS}, and (2) in Algorithm \ref{Alg:f2} we shuffle the $leaf(v)$ so that the leaf nodes are projected to the image in a different order. In Table \ref{table:augmentation}, we demonstrate the test performance of network models trained 50 epochs on MUTAG dataset using 1$\times$, 6$\times$, and 11$\times$ data augmentation. In Table \ref{table:augmentation}, we observe that the data augmentation significantly increases the classification accuracy on MUTAG. Similar observations have been made for the other datasets.
\begin{table}[h!]
\caption{Effects of data augmentation on MUTAG accuracy}
\label{table:augmentation}
\centering
\begin{tabular}{| c | c | c | c |} 
 \toprule
 Augmentation (times) & $1\times$ & $6\times$ & $11\times$ \\ 
 \hline
 Accuracy (\%) & 84.62 & 89.42 & 92.54 \\
 \bottomrule
\end{tabular}
\end{table}

\begin{figure}[t]
	\begin{minipage}[b]{0.45\columnwidth}
		\begin{center}
			\centerline{\includegraphics[width=\columnwidth]{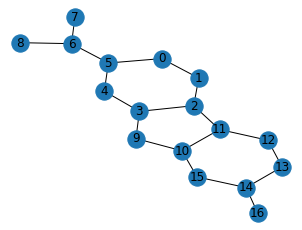}}
			\centerline{\footnotesize (a) Input Graph}
		\end{center}
	\end{minipage}
	\hfill
	\begin{minipage}[b]{0.45\columnwidth}
		\begin{center}
			\centerline{\includegraphics[width=\columnwidth]{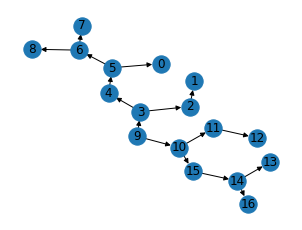}}
			\centerline{\footnotesize (b) BFS Tree 1}
		\end{center}
	\end{minipage}
	\hfill
	\begin{minipage}[b]{0.45\columnwidth}
		\begin{center}
			\centerline{\includegraphics[width=\columnwidth]{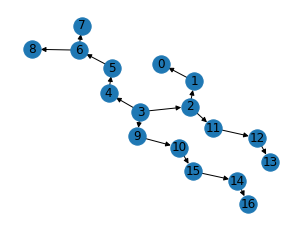}}
			\centerline{\footnotesize (b) BFS Tree 2}
		\end{center}
	\end{minipage}
	\hfill
	\begin{minipage}[b]{0.45\columnwidth}
		\begin{center}
			\centerline{\includegraphics[width=\columnwidth]{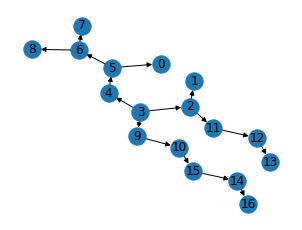}}
			\centerline{\footnotesize (b) BFS Tree 3}
		\end{center}
	\end{minipage}
	\caption{Illustration of several trees constructed using breath first search (BFS). (a) a graph sample from MUTAG dataset. (b,c,d) three BFS trees corresponding to the graph sample. \label{fig:BFS}}
\end{figure}

\textbf{Effects of Graph-Tree-Image Projection.} 
To understand the effectiveness of our proposed projection, we compare the classification results with the DeepWalk \cite{perozzi2014deepwalk} image projection in MUTAG dataset using the same network classifier. In this experiment we do not apply any data augmentation. The result shows that our projection gets 84.21\% $\pm$ 5.12\% in accuracy with our network classifier, while the DeepWalk projection with the same image size results in 78.32 \% $\pm$ 9.51 \% using the same classifier. The result indicates that our project better encodes the topological features in the graphs.
\begin{table}[h]
\caption{Comparison of MUTAG accuracy between TreeRNN and other neural network operators.}
\label{table:RNN}
\centering
\begin{tabular}{|c | c | c |} 
 \toprule
 Network Operator & Accuracy (\%) & Parameters\\ 
 \hline
 MLP & 77.81 & 6,024  \\ 
 2DConv & 81.55 & 38,792 \\
 RNN & 82.51 & 42,888 \\
 D-RNN & 80.91 & 10,120  \\
 \hline
 \textbf{TreeRNN} & \textbf{84.62} & 14,216 \\ 
 \bottomrule
\end{tabular}
\end{table}

\textbf{Effects of TreeRNN.}
To understand the effectiveness of our proposed TreeRNN as a feature extractor, we compare it with (1) a multi-layer perceptron (MLP), (2) a 2D convolution layer with $3 \times 3$ kernel (2DConv), (3) a conventional RNN layer (RNN), and (4) a distributed RNN layer (D-RNN) as described in \cite{lyu2019D_LSTM}. All feature extractors are implemented in the same classification network with the same feature size. In this experiment we do not apply any data augmentation. Table \ref{table:RNN} presents the comparison of various feature extractors in the MUTAG graph classification dataset. We observe that our TreeRNN achieves the best result in accuracy.

\section{Conclusion}
In this paper, we answer the question positively that graphs can be ordered to trees and further projected to image space so that order-sensitive network operators can benefit from its order and regularity. To this end, we propose a novel graph-tree-image projection, that projects a graph to image space while preserving its topological features between graph nodes. In addition, we propose TreeRNN, a 2D RNN scheme, that integrates the graph images simultaneously along with the tree layers and across the tree layers. In the experiment, we demonstrate that our work has comparable performance to the start-of-the-art in three datasets. As future work, we are interested in applying this method to real-world problems such as point cloud classification and segmentation.

{
\bibliographystyle{IEEEtran}
\bibliography{IEEEfull.bib}
}

\end{document}